\documentclass[11pt,a4paper]{article}
\usepackage{times,latexsym}
\usepackage{url}
\usepackage[T1]{fontenc}

%

\usepackage[acceptedWithA]{tacl2018v2} 




\usepackage{amsmath}
\usepackage{tikz-dependency}
\DeclareMathOperator*{\argmax}{arg\,max}

\usepackage{amssymb}
\usepackage{pifont}

\newcommand{\Prob}{\mathbb{P}}%

\usepackage[english]{babel}
\usepackage[utf8]{inputenc}
\usepackage{bm}
\usepackage{graphicx}
\usepackage{tikz}
\usepackage{xcolor}
\usepackage{url}
\usepackage{rotating}
\usepackage{multirow}

\usepackage[T1]{fontenc}

\usepackage{pslatex}
\usepackage[english]{babel}
\usepackage[utf8]{inputenc}
\usepackage{amsmath}
\usepackage{bm}
\usepackage{graphicx}
\usepackage{tikz}
\usepackage{xcolor}
\usepackage{url}
\usepackage{rotating}
\usepackage{amssymb}

\usepackage{amsthm}

\allowdisplaybreaks

\newcounter{theorem}

\newtheorem{corollary}[theorem]{Corollary}

\newtheorem{defin}[theorem]{Definition}

\newtheorem{lemma}[theorem]{Lemma}
\newtheorem{thm}[theorem]{Theorem}

\newcommand{\key}[1]{\textbf{#1}}

\newcommand{\soft}[1]{}
\newcommand{\nopreview}[1]{}

\title{Theoretical Limitations of Self-Attention in Neural Sequence Models}
\author{Michael Hahn \\ Stanford University \\ {\sf mhahn2@stanford.edu}}
\begin{document}
\maketitle
\begin{abstract}
Transformers are emerging as the new workhorse of NLP, showing great success across tasks.
Unlike LSTMs, transformers process input sequences entirely through self-attention.
Previous work has suggested that the computational capabilities of self-attention to process hierarchical structures are limited.
In this work, we mathematically investigate the computational power of self-attention to model formal languages.
Across both soft and hard attention, we show strong theoretical limitations of the computational abilities of self-attention, finding that it cannot model periodic finite-state languages, nor hierarchical structure, unless the number of layers or heads increases with input length.
These limitations seem surprising given the practical success of self-attention and the prominent role assigned to hierarchical structure in linguistics, suggesting that natural language can be approximated well with models that are too weak for the formal languages typically assumed in theoretical linguistics. 
\end{abstract}

\section{Introduction}

Transformers are emerging as the new workhorse of NLP, achieving the state-of-the-art in tasks such as language modeling, machine translation, and creating pretrained contextualized word embeddings.
Eschewing recurrent computations, transformers are entirely based on self-attention, performing their computations largely in parallel.
This enables them to scale to very long sequences \cite{vaswani2017attention,dai2019transformer,child2019generating}.
On the other hand, it has been suggested that this limits their expressiveness, as they cannot process input sequentially \cite{tran2018importance,dehghani2018universal,shen2018disan,chen2018best,hao2019modeling}.
One aspect thought to be challenging for sequence models is hierarchical structure and recursion.
Hierarchical structure is widely thought to be essential to modeling natural language, in particular its syntax~\cite{everaert2015structures}.
Consequently, many researchers have studied the capability of recurrent neural network models to capture context-free languages (e.g., \citet{kalinke1998computation,gers2001lstm,gruning2006stack,weiss2018practical,sennhauser2018evaluating,korsky2019computational}) and linguistic phenomena involving hierarchical structure (e.g., \citet{linzen2016assessing,gulordava2018colorless}).
Some experimental evidence suggests that transformers might not be as strong as LSTMs at modeling hierarchical structure~\cite{tran2018importance}, though analysis studies have shown that transformer-based models encode a good amount of syntactic knowledge (e.g., \citet{clark2019bert,lin2019open,tenney2019bert}).

In this work, we examine these questions from a theoretical perspective, asking whether models entirely based on self-attention are theoretically capable of modeling hierarchical structures involving unbounded recursion.
Formally, we study their ability to perform two computations that are thought to be essential to hierarchical structure:
First, their ability to correctly \key{close brackets}, a basic problem underlying all nonregular context-free languages and formalized by the \key{\textsc{Dyck}} language \cite{chomsky1963algebraic}.
Second, their ability to \key{evaluate iterated negation}, a basic component of the task of evaluating logical formulas, amounting to evaluating the \key{\textsc{Parity}} of bitstrings.
We show that neither of these problems can be solved by transformers and similar models relying entirely on self-attention, unless the number or size of parameters increases with the input length.
Besides representing basic building blocks of hierarchical structure, these languages also represent large classes of regular and context-free languages, meaning that our results carry over to classes of other formal languages.
Our results therefore also yield more generally limitations on the ability of self-attention to model finite-state languages and context-free languages.

While theoretical work has investigated the power of recurrent neural networks in depth (e.g., \citet{siegelman1991neural, bengio1994learning, weiss2018practical, miller2018recurrent, merrill2019sequential}), the theoretical study of self-attention has begun only recently \citep{perez2019turing,hsieh2019robustness}.
Our study provides the first theoretical results on limitations in the power of self-attention.
We will provide results both for hard and soft attention settings, using different proof methods.
Our results are strongest in the hard attention setting, holding without further assumptions on activation functions and parameter norms.
In the soft attention settings, we still obtain results assuming smoothness of activation functions as used in practical implementations.

After discussing related work (Section~\ref{sec:related}), we introduce self-attention (Section~\ref{sec:def-selfatt}) and two fundamental formal languages representing regular and context-free languages (Section~\ref{sec:langs}).
We then prove that self-attention cannot model these languages using either hard (Section~\ref{sec:hard}) or soft (Section~\ref{sec:soft}) attention.
Finally, we discuss our results (Section~\ref{sec:discussion}).

\section{Related Work}\label{sec:related}
\paragraph{Prior Work on Self-Attention}
Transformers were proposed by \citet{vaswani2017attention}, previous related work using self-attention includes \citet{cheng2016long,parikh2016decomposable,paulus2017deep,lin2017structured}.
It has been a recurrent suggestion in the literature that transformers, relying entirely on self-attention, are restricted computationally, as they cannot process their input sequentially.
\citet{dehghani2018universal} suggested that 
transformers cannot compute functions that require sequential processing of input, without providing further details or proofs.
Similarly, \citet{shen2018disan,chen2018best,hao2019modeling} have introduced extensions of transformers with recurrence, citing similar intuitions about limitations of transformers.
Our results provide the first explicit formalization of these limitations.

A few studies have experimentally tested the abilities of transformers to learn structures.
Most related to our work, \citet{tran2018importance} compared the ability of transformers and LSTMs to learn hierarchical structure, specifically English subject-verb agreement and evaluating logical formulas.
Their experimental results suggested that LSTMs are better at learning hierarchical structure.
\citet{yang2019assessing} experimentally investigated the power of self-attention to extract word order information, finding differences between recurrent and self-attention models; however, these were modulated by the training objective.
\citet{lin2019open} and \citet{tenney2019bert} show that BERT \cite{devlin2018bert} encodes syntactic information.

Theoretical study of transformers was initiated by \citet{perez2019turing}, who theoretically studied the ability of Seq2Seq transformers to emulate the computation of Turing machines.
While we consider incremental modeling of sequences, where the number of computation steps is bounded by the input length $n$, they study the setting in which the transformer computes an unbounded number of autoregressive decoding steps, not bounded in the input length $n$.
Even more recently, and more closely related to our interest here, \citet{hsieh2019robustness} studied the adversarial robustness of transformers.
While they focused on experiments on NLP tasks, they also provided a theoretical analysis, showing that a single self-attention layer with a single head will be robust against input perturbations, assuming that input embeddings are drawn uniformly from the unit sphere.
One of our results, Lemma~\ref{lem:soft-tech}, can be seen as considerably widening the scope of their result, both by avoiding distributional assumptions, and by applying to transformers with arbitrary numbers of heads and layers.

\paragraph{Investigating the Power of Sequence Modeling Architectures}
The computational power of recurrent neural networks has been a focus of study.
A particular focus has been on their ability to learn non-regular context-free languages, thought to provide simple models of recursion and hierarchical structure as found in natural language.

A range of studies has experimentally examined the ability of recurrent networks to model counter languages such as $a^nb^n$ \cite{kalinke1998computation,gers2001lstm,cartling2008implicit,weiss2018practical,suzgun2019evaluating}.
Other work has experimentally studied the performance of recurent architectures on learning to recognize well-bracketed strings, a similar but more challenging problem \cite{sennhauser2018evaluating,skachkova2018closing,bernardy2018can}.
Beyond modeling formal languages, another line of work has studied the ability of LSTMs to model hierarchical structure as occurring in realistic natural language data \cite{linzen2016assessing,gulordava2018colorless}.

Recently, \citet{merrill2019sequential} and \citet{korsky2019computational} theoretically studied several types of recurrent networks. \citet{merrill2019sequential} showed that -- in the finite precision setting -- LSTMs recognize a subset of the counter languages, whereas GRUs and simple RNNs recognize regular languages.
\citet{korsky2019computational} showed, among other results, that arbitrary-precision RNNs can emulate pushdown automata, and can therefore recognize all deterministic context-free languages.

A related, though different, strand of research has investigated the power of neural networks to model Turing machines.
A classical result~\cite{siegelman1991neural} states that -- given unlimited computation time -- recurrent networks can emulate the computation of Turing machines.
Very recently, \citet{perez2019turing} have shown the same result for both (argmax-attention) Transformers and Neural GPUs.
The crucial difference between these studies and studies of language recognition is that, in these studies, the networks are allowed to perform unbounded recurrent computations, arbitrarily longer than the input length.

\section{Self-Attention}\label{sec:def-selfatt}
Here we define self-attention as used in Transformers, following \citet{vaswani2017attention}, with some changes in the notation to simplify arguments in our proofs.
We have an input ${\bf x} = x_1 \dots x_n$, where all $x_i$ come from some finite alphabet $\mathcal{V}$, and $x_n$ is an end-of-sequence symbol.
This input is then encoded into a sequence of input embeddings $v_1,\dots,v_n$ using some embedding map $\mathcal{V} \rightarrow \mathbb{R}^k$. 
We furthermore have a sequence $p_1, p_2, \dots$ of positional embeddings $p_i \in \mathbb{R}^k$. These are independent of the input ${\bf x}$, and can be computed through some predefined scheme, or could be learned for each position occurring in the training data \citep{vaswani2017attention}.
Input and positional embeddings are combined (e.g., via addition or concatenation) to vectors $y_i^{(0)} = f(v_i, p_i)$ ($i=1, \dots, n$), which we will refer to as Layer 0.

A transformer has a fixed number $L$ of \key{layers}; the \key{activations} $y_i^{(k)}$ at position $i$ of the $k$-th layer ($k=1, \dots, L$) are defined as follows.
Each layer has a set of $H$ \key{attention heads}; we first compute attention scores for the $h$-th head:
\begin{equation}
    a_{i,j}^{(k,h)} = f^{att}_{k,h}\left(y_i^{(k-1)}, y_j^{(k-1)}\right)
\end{equation}
where $f^{att}_{k,h}$ combines the activations from the previous level into an attention score.
This can be implemented e.g. using dot product or additive attention.
Specifically, the implementation described by \citet[p. 5]{vaswani2017attention} linearly transforms the position-wise activations $y_i^{(k-1)}$ separately into `query' vectors $Q y_i^{(k-1)}$ and `key' vectors $K y_i^{(k-1)}$ (for some parameter matrices $K, Q$); 
the attention score $a_{i,j}^{(k,h)}$ is then implemented as a scaled dot product of query $Q y_i^{(k-1)}$ and key $K y_j^{(k-1)}$.

The  activation of the head is computed by weighting according to attention weights $\hat{a}_{i,j}^{(k,h)}$:
\begin{equation}
    b_{i,k,h} = \sum_{j=1}^n \hat{a}_{i,j}^{(k,h)} y_j^{(k-1)} 
\end{equation}
We note that the implementation described by \citet{vaswani2017attention} first linearly transforms the activations $y_j^{(k-1)}$ into `value vectors' before multiplying with $ \hat{a}_{i,j}^{(k,h)}$; this is mathematically equivalent to applying this linear transformation to $b_{i,k,h}$ as part of the map $f^{act}$ we describe below.

In the soft attention version, these weights $\hat{a}_{i,j}^{(k,h)}$ are obtained by the softmax operation: $\hat{a}_{i,\cdot}^{(k,h)} = \operatorname{softmax}(a_{i,\cdot}^{(k,h)})$.
In the hard attention variant \cite{perez2019turing}, one takes the actual maximum attention values:
$\hat{a}_{i,j}^{(k,h)} = \delta_{j, \argmax_{j'} a_{i,j'}^{(k,h)}}$.\footnote{When there are multiple positions with maximal attention weight, we will assume that the one occurring first in the sequence is chosen. Our analysis also works under other schemes of resolving ties, such as random selection.}

The  per-position activations are then computed as
\begin{equation}
    y_i^{(k)} := f^{act}(y_i^{(k-1)}, b_{i,k,1}, \dots, b_{i,k,H})
\end{equation}
where $f^{act}$ is implemented as a fully-connected feedforward network with a skip-connection \cite{vaswani2017attention} from $y_i^{(k-1)}$ to $y_i^{(k)}$.

\paragraph{Hard and Soft Attention}
There is a choice between soft attention and hard attention \cite{shen2018reinforced,perez2019turing}.
The one prior theoretical study of transformers~\cite{perez2019turing} assumes hard attention.
In practice, soft attention is easier to train with gradient descent; however, analysis studies suggest that attention often concentrates on one or a few positions in trained transformer models \cite{voita2019analyzing,clark2019bert} and that the most important heads are those that clearly focus on a few positions~\cite{voita2019analyzing}, suggesting that attention often behaves  like hard attention in practice. 
We will  examine both hard (Section~\ref{sec:hard}) and soft (Section~\ref{sec:soft}) attention.

\paragraph{Formalizing Language Recognition}
We consider the problem of language recognition, the task of classifying input strings as belonging to or not belonging to a formal language.
Following~\citet{weiss2018practical}, we formalize this as the sequence-to-sequence task of mapping words to labels $1$ (`in the language') and $0$ (`not in the language').
Following the construction of transformers in sequence-to-sequence tasks \cite{vaswani2017attention}, we compute a softmax probability vector for this label from the last activation $y_{n}^{(L)}$, obtained after reading the end-of-sequence symbol.

\section{Regular and Context-Free Languages}\label{sec:langs} 

We will analyze the ability of transformers to recognize regular and context-free languages, using two prominent representatives.

\paragraph{\textsc{Parity}} is the set of bit strings such that the number of $1$s is even.
This is a very simple regular language, recognized by a finite-state automaton with two states.
The regular languages form the lowest layer of the Chomsky hierarchy, and even simple RNNs can compute all regular languages.
Within the regular languages, a particularly basic class is formed by the \emph{counter-free} or \emph{star-free} languages \cite{mcnaughton1971counter}, which can be expressed by regular expressions using only union, complementation, and concatenation.
In some sense, \textsc{Parity} is the simplest non-counter-free, or \emph{periodic}, regular language.
This means, if transformers cannot compute \textsc{Parity}, they cannot recognize (almost)\footnote{Inability to compute \textsc{Parity} entails that they cannot recognize any regular language whose syntactic morphism is not quasi-aperiodic~\cite[p. 488]{barrington1992regular}.} any regular language that is not counter-free.
In the context of natural language, \textsc{Parity} naturally arises in the context of evaluating logical formulas:
Evaluating iterated negations is tantamount to counting whether the number of nested negations is even or odd.
If transformers cannot compute parity, they also cannot evaluate logical formulas accurately.

\paragraph{\textsc{2Dyck}} is the language of correctly bracketed words consisting of two types of brackets (`(', `)' and `[', `]').
This language is a very simple model of hierarchical structure.
The Chomsky-Sch{\"u}tzenberger theorem \cite{chomsky1963algebraic} states that any context-free language arises from a variant of \textsc{2Dyck} with multiple types of parentheses through intersection with a regular language and homomorphisms.
Consequently, the ability of LSTMs to model languages such as \textsc{2Dyck} has been an object of experimental study \cite{sennhauser2018evaluating,skachkova2018closing,bernardy2018can}.
Our theoretical results will show that transformers are strongly limited in their ability to model \textsc{2Dyck}, including variants with fewer or more types of parentheses.

\section{Results for Hard Attention}\label{sec:hard}

We will start our analysis with the study of hard attention~\cite{perez2019turing}.
We show that hard attention transformers cannot represent \textsc{Parity} or \textsc{2Dyck}. 
To keep the results maximally general, our analysis will use combinatorial arguments and make no assumption about, e.g., activation functions and the norms of parameter matrices.
In fact, we do not even assume that the internal position-wise representations $y_{j}^{(k)}$ in each layer are vector-valued, as opposed to, say, discrete structures.

\begin{figure*}[ht]
    \centering
    \begin{tabular}{cccc}
    (a) & (b) & (c) & (d) \\
    \includegraphics[width=0.23\textwidth]{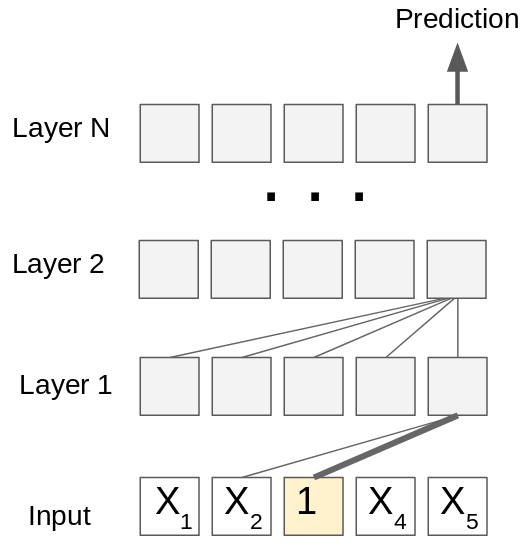} &
        \includegraphics[width=0.23\textwidth]{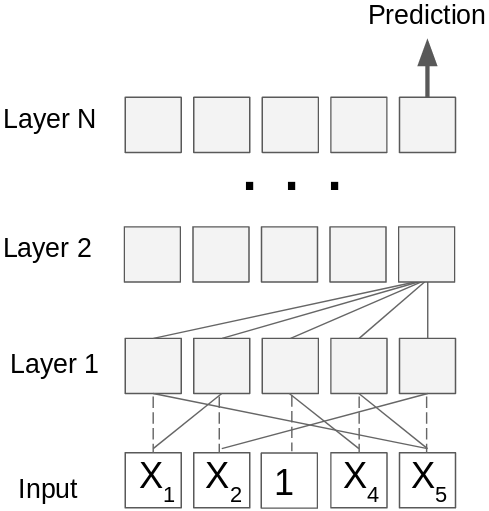}&
    \includegraphics[width=0.22\textwidth]{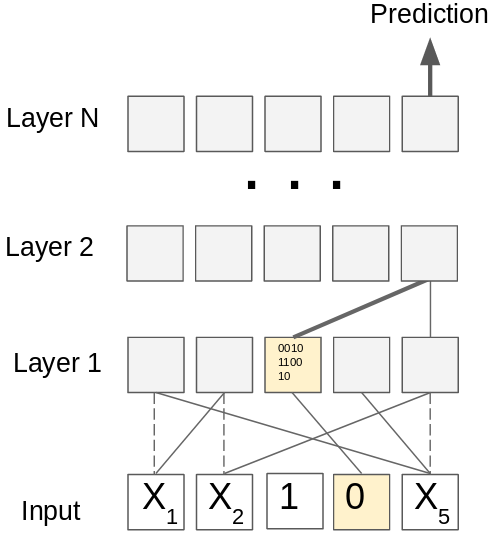} &
        \includegraphics[width=0.23\textwidth]{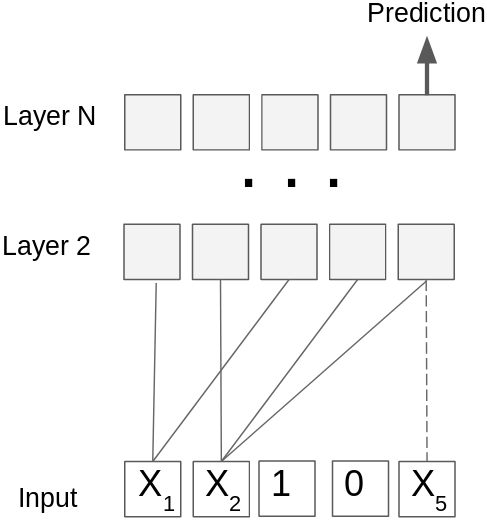}
        \end{tabular}
	\caption{Iteratively reducing the layers of a transformer by fixing a few input symbols. (a) By applying a suitable input restriction, we fix a small number of input symbols, `attracting' attention from the first layer to a few inputs. (b) After this step, Lemma~\ref{lemma:depth-red} ensures that each activation in the first layer only depends on a small number of input symbols that it can attend to (solid connections), plus the input that feeds into it via a skip connection (dashed connections). (c) We again fix a few input symbols in such a way as to `attract' attention of layer-2 heads to some layer-1 activations. As a result, each layer-2 activation only depends on a small number of layer-1 activations, again by Lemma~\ref{lemma:depth-red}. (d) After this step, each layer-1 activation only depends on a few inputs, and we can remove layer 1.
	}
	\label{fig:depth-reduction}
\end{figure*}

We aim to prove that no hard-attention transformer is capable of representing \textsc{Parity} or \textsc{2Dyck}, by constructing -- for any given candidate transformer model -- a set of input words that this model will have to misclassify.
The basic idea (see Figure~\ref{fig:depth-reduction}) behind the proof is that, by fixing a small fraction of the input symbols in a particular way, we can `capture the attention' of the transformer in such a way that it ends up ignoring almost all remaining input symbols.
This shows that the transformer could not have solved a problem such as \textsc{Parity}, where every single input bit matters.

In order to formalize the idea of `fixing'  a few input bits, we introduce the notion of input restrictions:
An \key{input restriction} (short: \key{restriction}) $\rho$ is a family of maps $\rho_n : \{1, \dots, n\} \rightarrow \{*, 0, 1\}$ for $n \in \mathbb{N}$.
An input restriction $\rho$ is applied to a transformer by fixing, when the input length is $n$, the input symbol $x_i$ to the value $\rho_n(i)$ whenever $\rho_n(i) \in \{0, 1\}$.
The output value of the resulting transformer only depends on those inputs $x_i$ such that $\rho_n(i) = *$.

The idea of using such input restrictions has been successful in the theory of Boolean circuits~\cite{furst1984parity,hastad1994optimal}.
In particular, \citet{furst1984parity}  famously used it to prove that polynomial-size bounded-depth Boolean circuits with $\wedge, \vee$, and $\neg$ gates cannot compute \textsc{Parity}.
We describe a new method to prove existence of suitable restrictions appropriate to transformers, as the proof approaches from the Boolean circuit literature do not seem to generalize to networks with real-valued activations.

The following result formalizes the claim that any transformer can be forced to ignore input bits by fixing some inputs in a particular way:
\begin{thm}\label{thm:hardmax-main}
Let any hard attention transformer be given, and let $C \in (0,1)$.
Then there is a restriction $\rho$ and an integer $c > 0$ such that 
$$|\{i \leq n: \rho_n(i) = *\}| \geq Cn$$
(for all sufficiently large $n$) and such that the function computed by the transformer on the restricted input depends only on $\leq c$ inputs, independent of input length $n$.
\end{thm}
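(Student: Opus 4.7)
The plan is to prove the theorem by induction on the layer index, with Lemma~\ref{lemma:depth-red} as the single-layer reduction step. Fix the transformer, write $L$ for its depth and $H$ for the number of heads per layer, and set $C' := C^{1/L}$ so that $(C')^L = C$. Starting from the empty restriction, I construct a chain $\rho^{(0)}, \rho^{(1)}, \ldots, \rho^{(L)}$ of progressively finer restrictions satisfying the invariant: (i) after applying $\rho^{(k)}$, at least $(C')^k n$ positions remain free for all sufficiently large $n$; and (ii) every layer-$k$ activation $y_i^{(k)}$ is a function of at most $c_k$ of the still-free inputs, where $c_k$ is a constant independent of $n$.

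The base case $k=0$ is immediate, since $y_i^{(0)} = f(v_i, p_i)$ depends only on the single input $x_i$; take $c_0 = 1$. For the inductive step, assume the invariant at level $k-1$. A layer-$k$ activation at position $i$ is assembled from the skip input $y_i^{(k-1)}$ and the $H$ head outputs $b_{i,k,h} = y_{j^*(i,h)}^{(k-1)}$, where $j^*(i,h) = \argmax_{j} a_{i,j}^{(k,h)}$ is itself a function of the layer-$(k-1)$ activations, and hence of the currently free inputs. I invoke Lemma~\ref{lemma:depth-red} with survival rate $C'$: it refines $\rho^{(k-1)}$ to $\rho^{(k)}$, retains a $C'$-fraction of the still-free positions, and guarantees that under the refined restriction each head at layer $k$, for each query $i$, attends to one of only a bounded set of key positions. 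Combining the skip input with the $H$ selected keys, and using the inductive bound on each layer-$(k-1)$ activation, yields a recurrence of the form $c_k \le c_{k-1}(1 + H M)$ for a constant $M$ supplied by the lemma. Since $L$, $H$, $M$ are fixed, $c_L$ is a constant.

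After $L$ iterations, the final output $y_n^{(L)}$ depends on at most $c := c_L$ free inputs, while the fraction of surviving free positions is $(C')^L n = C n$, proving the theorem with $\rho := \rho^{(L)}$ and $c := c_L$.

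The main obstacle is thus entirely deferred to Lemma~\ref{lemma:depth-red}, which carries the real content: attracting each hard-attention argmax to a bounded locus by fixing only a small fraction of the inputs. Unlike the classical Furst--Saxe--Sipser setting, where switching lemmas exploit the combinatorial structure of $\wedge, \vee, \neg$ gates and bounded fan-in, here each argmax is a piecewise-constant function of real-valued attention scores whose cell boundaries depend continuously on every previous-layer activation. The technically novel ingredient will therefore be a restriction-existence argument tailored to this real-valued attention setting (most naturally, a greedy or probabilistic construction with a union bound over query positions and heads); granted that lemma, the theorem itself is a clean depth induction as above.
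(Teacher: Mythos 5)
Your proposal matches the paper's own proof of Theorem~\ref{thm:hardmax-main}: both iterate the Depth Reduction Lemma once per layer, choosing the per-step survival constants so their product is $C$ (the paper leaves this implicit as ``choosing the constants $C'$ appropriately''; your $C' = C^{1/L}$ is a valid instantiation), and both track a bounded dependency count that grows by a constant factor at each of the $L$ steps, deferring all of the real difficulty to the lemma itself. The argument is correct modulo Lemma~\ref{lemma:depth-red}, exactly as in the paper.
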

We first show how this entails that transformers do not recognize the two formal languages:
\begin{corollary}
Transformers with hard attention cannot model \textsc{Parity} or \textsc{2Dyck}.
\end{corollary}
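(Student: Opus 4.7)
The plan is to derive each non-recognition claim from Theorem~\ref{thm:hardmax-main} by exhibiting, for any candidate hard-attention transformer, a pair of restriction-consistent input strings of the same length that the transformer is forced to label identically but which lie on opposite sides of the language in question.

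Fix the transformer and apply Theorem~\ref{thm:hardmax-main} with $C = 1/2$, obtaining a restriction $\rho$ and a constant $c$ such that, for every sufficiently large $n$, the free set $F_n := \{i : \rho_n(i) = *\}$ has $|F_n| \geq n/2$, while the transformer's output on $\rho_n$-restricted inputs is a function only of the values at some ``determining'' subset $T_n \subseteq F_n$ with $|T_n| \leq c$. Once $n > 2c$, there is an ``ignored'' free position $i_\star \in F_n \setminus T_n$ that the transformer does not consult. For \textsc{Parity}, take any two $\rho_n$-consistent strings that agree everywhere except at $i_\star$: the transformer assigns them the same label, yet they have opposite parities, so at least one is misclassified. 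Since $n$ can be taken arbitrarily large, the transformer cannot recognize \textsc{Parity}.

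For \textsc{2Dyck} the template is the same: I produce a $\rho_n$-consistent pair $x \in \textsc{2Dyck}$ and $x' \notin \textsc{2Dyck}$ that agree on $T_n$, from which the transformer, labelling them identically, must misclassify one of them. Given an arbitrary fixing of the $T_n$-values, I construct $x$ by filling the non-$T_n$ free positions with a well-nested bracket pattern that balances the already-constrained symbols; because $|F_n \setminus T_n| = \Omega(n)$ is far larger than the $O(1)$ constraints to be balanced against, such a completion can be written down explicitly. I then obtain $x'$ by flipping the bracket at a single position in $F_n \setminus T_n$ of $x$, which destroys either the total bracket count or the prefix-balance condition and thereby places $x'$ outside \textsc{2Dyck}.

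The main obstacle I anticipate is the \textsc{2Dyck} step. \textsc{Parity}'s sensitivity to every single bit makes its argument essentially immediate, whereas \textsc{2Dyck} is globally constrained and the fixed portion of $\rho$ could in principle preclude any well-nested completion---for instance, by fixing an early position to a close bracket with no matching open. Circumventing this requires exploiting the flexibility afforded by $|F_n| \geq n/2$: either one verifies directly that $\rho$ admits a \textsc{2Dyck} completion, or one sharpens $\rho$ by fixing a constant number of additional positions (which preserves the conclusion of Theorem~\ref{thm:hardmax-main} after redefining $c$) so that the remaining free positions include a long contiguous block capable of carrying any required balanced bracket sequence. This is the only place where the argument needs structural, rather than purely combinatorial, input from the restriction produced by Theorem~\ref{thm:hardmax-main}.
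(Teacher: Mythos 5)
Your \textsc{Parity} argument is correct and is essentially the paper's: an ignored free position exists for large $n$, flipping it flips membership but not the output. No issues there.

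The \textsc{2Dyck} half has a genuine gap, and you have in fact put your finger on it yourself without resolving it. The restriction $\rho$ handed to you by Theorem~\ref{thm:hardmax-main} fixes its $(1-C)n$ positions to values chosen purely to capture attention; you have no control over them. If, say, $\rho_n(1)$ is fixed to a closing bracket, then \emph{every} $\rho_n$-consistent string is outside \textsc{2Dyck}, the transformer may simply reject all restricted inputs, and no member/non-member pair exists at all. Neither of your proposed remedies repairs this: you cannot ``verify directly that $\rho$ admits a \textsc{2Dyck} completion'' because $\rho$ is an existential output you do not get to inspect or re-choose, and fixing \emph{additional} positions to carve out a contiguous free block cannot undo an already-fatal fixed value elsewhere in the string. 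The problem is not the geometry of the free set but the adversarially chosen values on the fixed set.

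The paper's fix is to act \emph{before} the adversary does: it first imposes its own restriction, fixing the first $0.2n$ positions to `(' and the last $0.2n$ to `)', and only then runs the depth reduction (the Depth Reduction Lemma, Lemma~\ref{lemma:depth-red}, is explicitly stated relative to an arbitrary initial restriction, so this composition is legitimate). The large opening-bracket buffer at the start guarantees that whatever small fraction of middle positions the depth reduction subsequently fixes, and to whatever values, the prefix height can never be driven negative and the net imbalance introduced by the fixed middle symbols is small enough to be absorbed by the many remaining free middle positions; hence both a balanced and an unbalanced completion agreeing on the $\leq c$ determining positions survive. (The paper also simplifies by reducing to \textsc{1Dyck}, which suffices since \textsc{1Dyck} is the single-bracket special case.) To repair your proof you would need to import exactly this move: choose a language-friendly pre-restriction first and apply the depth reduction to the pre-restricted model, rather than applying the theorem cold and hoping its restriction is compatible with membership.
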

\begin{proof}
For \textsc{Parity}, after applying a restriction, the transformer's output depends on $c$ inputs.
An input of sufficiently large size $n$ thus has unrestricted inputs that do not influence the output.
But flipping a single input bit changes the value, so the transformer's output cannot match membership in \textsc{Parity} beyond chance for such $n$.

For \textsc{2Dyck}, we show that hard attention transformers cannot even solve the simpler variant \textsc{1Dyck} with a single bracket type (`(', `)').
We first restrict the first $0.2n$ input positions to `(' and the last $0.2n$ positions to `)'.
After then applying the restriction provided by the theorem with $C=0.9$, the resulting restricted input will still be compatible with both well-bracketed and non-well-bracketed inputs, but the prediction will depend only on a bounded number of positions.
As the prediction depends on only a bounded number of positions, this shows the transformer could not recognize \textsc{1Dyck}, and thus also not \textsc{2Dyck}.
\end{proof}

\paragraph{Discussion}
It may be instructive to compare to similar languages that \emph{can} be modeled by hard-attention transformers.
First, $1^*$ (over the alphabet $\{0,1\}$) is the regular language of words that have only ones and no zeroes; its minimal automaton has two states, like \textsc{Parity}.
A transformer can recognize this by having an attention head that attends to a position with zero input if it exists, and rejects if the head found such a position.
Second, $a^nb^n$ is a very basic context-free language.
It can be recognized using suitable positional embeddings by (1) having one head attend to the largest position $n$, (2) using this information to attend to any $b$ at position $<n/2$ or any $a$ at position $\geq n/2$. If such a symbol is found, the model rejects, else it accepts.
A crucial difference between these languages and \textsc{Parity} / \textsc{2Dyck} is that fixing a few inputs in any part of an input string can easily force nonmembership, e.g. a single 0 for $1^*$, and an $a$ in the second half for $a^nb^n$.
Therefore, such simple languages are immune to the depth reduction method, and indeed \emph{can} be modeled perfectly with self-attention.

In general, the depth reduction method applies to languages that are sufficiently \emph{sensitive}: If, for some $C \in (0,1)$, fixing $Cn$ input symbols cannot force a word to be inside or outside of the language, then hard-attention transformers cannot recognize this language.
Sensitivity of functions 
has been studied in computational complexity~\cite{boppana1997average,gopalan2016smooth} and more recently linked to generalization in feedforward networks~\cite{de2018deep}.
We intend to investigate these connections in future work.

\paragraph{Proof Idea of the Theorem}
Our approach for proving Theorem~\ref{thm:hardmax-main} will be to construct input restrictions in a layerwise manner, starting from layer 1. 
In order for this construction to go through, the main challenge is to construct a suitable restriction at a given layer:
As shown in Figure~\ref{fig:restr}, this restriction should only affect a few input bits (about $(1-C^{1/L})n$ many input bits), while forcing each attention head in the first layer to ignore all but $c$ input bits.
Perhaps surprisingly, this is possible; the idea is to fix input bits that achieve high attention scores for several heads, so that input bits that cannot achieve such high attention scores will be ignored.

\begin{figure}[ht]
    \centering
    \begin{tabular}{cccc}
    (a) & (b) \\
    \includegraphics[width=0.22\textwidth]{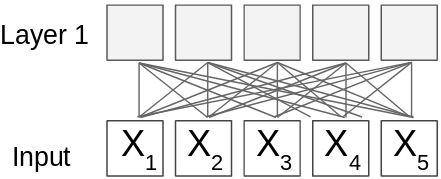} &
        \includegraphics[width=0.22\textwidth]{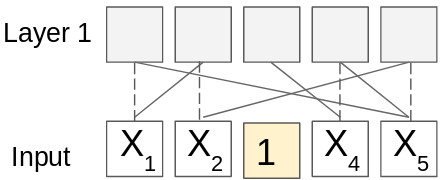}&
        \end{tabular}
	\caption{Finding a good input restriction: (a) Every attention head in the first layer could potentially attend to any input bit. (b) Perhaps surprisingly, one can fix a small number of input bits in such a way that each layer-1 attention head can only possibly attend to $c$ (here, $c=1$) inputs, and ignores all other inputs. Each activation vector $y_j^{(1)}$ in the first layer then only depends on the $H\cdot c$ inputs that its $H$ (here, $H=1$) attention heads can attend to, plus the input $x_j$ that feeds into it via a skip-connection.}
	\label{fig:restr}
\end{figure}

Once we have shown that such a restriction always exists, we can use this technique to iteratively remove layers, as illustrated in Figure~\ref{fig:depth-reduction}:
After we have applied the first such restriction, each of the heads in the first layer will only depend on a bounded number $c$ of input positions.
In the second step, we apply the same argument to the heads in the second layer, so that each head in the second layer only depends on a bounded number $c'$ of heads in the first layer.
After this step, we can collapse the first layer into a collection of feedforward networks that transform a bounded number $\leq cc'$ of input positions into an activation $y_i^{(0)}$ of the lowest layer.
After this step, the first layer has been entirely removed.
Iterating this argument, we remove all layers until the prediction output only depends on a bounded number of input positions, bounded independently of input length.

We now make these ideas formal.
After the removal of the first layer of a transformer, the resulting structure is not a transformer any more, as each head in the lowest layer now depends on a \emph{combination} of input positions.
We introduce a technical definition to make this concept precise:

\begin{defin}[$c$-Transformer]
Let $c$ be a positive integer. A $c$-transformer with $L$ layers is one in which the layer-0 activations $y_j^{(0)}$ depend on the embeddings not just at one position $j$, but are a function of the embeddings at $\leq c$ input positions:
\begin{equation}
    y_j^{(0)} = f^{inp}_{n,j}((v_{i_1^{j,n}}, p_{i_1^{j,n}}), \dots, (v_{i_c^{j,n}}, p_{i_c^{j,n}} ))
\end{equation}
for some indices ${i_s^{j,n}} \in \{1, \dots, n\}$ ($s = 1, \dots, c$).
\end{defin}

With this technical notion, we show that we can reduce layers, iteratively removing the lowest layer until no self-attention layer is left:

\begin{lemma}[Depth Reduction Lemma]\label{lemma:depth-red}
Given a $c$-transformer with $L$ layers, and some restriction $\rho$ such that
\begin{equation}
|\{i \leq n: \rho_n(i) = *\}| \geq Cn
\end{equation}
($C \in (0,1]$)
for all sufficiently large $n$.
Choose any $C' < C$.
Then there is a restriction $\rho'$ 
such that
\begin{equation}
|\{i \leq n: \rho'_n(i) = *\}| \geq C'n
\end{equation}
for all sufficiently large $n$, 
and such that the resulting function is computed by a $( c\cdot(2^ckH+1))$-transformer with $L-1$ layers, for some integer $k$ (depending on $C'$), where $H \geq 1$ is the number of attention heads at each layer and position.
\end{lemma}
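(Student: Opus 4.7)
The plan is to refine $\rho$ into $\rho'$ so that each of the $H$ attention heads at layer $1$, at every query position, has its hard-attention $\argmax$ forced into a set of size at most $2^{c} k$. Once this is achieved, a layer-$1$ activation $y_j^{(1)}$ is determined by the $c$ inputs feeding into $y_j^{(0)}$ via the skip-connection, plus the $c$ inputs feeding into each of the at most $H \cdot 2^{c} k$ possible attention targets, giving at most $c(1 + H \cdot 2^{c} k)$ input dependencies per position. This matches the bound stated in the lemma and lets us fold the old layer $0$ and old layer $1$ together into the layer-$0$ of an $(L-1)$-layer $(c(2^{c}kH+1))$-transformer.

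I would build $\rho'$ by a chain of restrictions $\rho = \rho^{(0)}, \rho^{(1)}, \dots, \rho^{(H)} = \rho'$, where step $h$ concentrates the $h$-th layer-$1$ head while preserving a factor $\gamma := (C'/C)^{1/H}$ of the currently free positions; multiplying over the $H$ steps yields $\geq \gamma^{H} C n = C' n$ free positions in $\rho'$. This reduces the lemma to a single-head statement: starting from a restriction with at least $\alpha n$ free positions, for any $\beta < \alpha$ one can restrict further to $\geq \beta n$ free positions so that one designated layer-$1$ head's $\argmax$ at every query lies in a set of size $\leq 2^{c} k$, for some integer $k = k(\alpha,\beta)$. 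The structural fact driving this is that $y_j^{(0)}$ depends on at most $c$ binary inputs, so as the free inputs at $j$ vary it takes only $\leq 2^{c}$ distinct values (the positional information being fixed); consequently, for any query $i$ whose own $c$ input bits are pinned down, the score $a_{i,j}^{(1,h)} = f^{att}_{1,h}(y_i^{(0)}, y_j^{(0)})$ takes only $\leq 2^{c}$ distinct values as a function of the free inputs at $j$.

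To prove the single-head statement, I would partition the currently free positions into $k$ cells of roughly equal size and, within each cell, designate a representative subset of at most $2^{c}$ positions (one per realizable $y^{(0)}$-value inside the cell), fixing the non-representative positions' inputs in such a way that, from the hard-attention viewpoint, no non-representative can ever be the $\argmax$ from any query. Because hard attention only compares scores, this reduces to a combinatorial choice of representatives and frozen values that makes every non-representative dominated by some representative of its own cell, uniformly over all queries and over all settings of the remaining free inputs. This domination step is the main obstacle of the proof: it has to be carried out without any regularity or norm assumption on $f^{att}_{1,h}$, and I expect it to go through by a pigeonhole argument over the $\leq 2^{c}$ score profiles available per cell, with the number of cells $k$ forced to grow as $\beta \to \alpha$, which accounts for the $C'$-dependence of $k$ in the lemma. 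Once the single-head statement is secured, iterating it over the $H$ heads and collapsing the combined layer-$0$/layer-$1$ block into the layer-$0$ of a $(c(2^{c}kH+1))$-transformer is routine bookkeeping.
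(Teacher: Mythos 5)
Your high-level target is right---force each layer-1 head's attention into a set governed by $k$ and $2^c$, then fold layers 0 and 1 together---and your final dependency count matches the paper's. But the core of your argument has a gap, and one step contradicts the lemma's conclusion outright. The contradiction first: in your single-head step you freeze the inputs of \emph{every} non-representative position in every cell, leaving only the $\leq 2^{c}k$ representatives (hence $O(1)$ free input bits) unrestricted. That makes it impossible to retain $\geq \beta n$ free positions, so the restriction you build cannot satisfy $|\{i \leq n : \rho'_n(i) = *\}| \geq C'n$. The paper does the opposite: it fixes the inputs of a \emph{few} positions---chosen, for each query $i$, head $h$, and each of the $\leq 2^c$ possible values $z$ of $y_i^{(0)}$, to be positions of near-maximal \emph{achievable} attention score---to the values that actually achieve those maxima. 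A position so fixed then dominates every position ranked below it in that ordering, so the dominated positions can stay free yet can never be the $\argmax$; only about $(1-C'/C)n$ inputs ever get touched.

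Second, the step you yourself flag as ``the main obstacle''---choosing representatives and frozen values so that no non-representative is ever the $\argmax$ uniformly over all queries---is exactly the part that does not follow from a pigeonhole argument. The score function $f^{att}_{1,h}(z,\cdot)$ changes with the query position $i$ and with $z$, so there are $2^{c}Hn$ distinct ranking problems to solve simultaneously, not $H$; your decomposition over the $H$ head indices leaves the $2^{c}n$-fold multiplicity over queries and values untouched, and a choice of representatives that dominates for one query can fail for another. The paper resolves this simultaneity probabilistically: Stages 1--2 are counting/greedy arguments that bound how many unsatisfied layer-1 heads can $k$-depend on any one input, which bounds the dependency degree among the bad events ``head $(i,h)$ with value $z$ has none of its top-$k$ positions fixed to a max-achieving value''; Stage 3 then applies the Lov\'asz Local Lemma to a random restriction (each free bit fixed with probability $q$) to exhibit a single restriction avoiding all $2^{c}Hn$ bad events while, by a Chernoff bound, keeping $\geq C'n$ positions free. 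Without something playing the role of that simultaneous existence argument, your proof does not go through.
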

The lemma implies Theorem~\ref{thm:hardmax-main}:
\begin{proof}[Proof of Theorem~\ref{thm:hardmax-main}]
The output of the transformer is determined by the last activation $y_{n}^{(L)}$.
Apply the Depth Reduction Lemma iteratively, choosing the constants $C'$ in the lemma appropriately, until only the zero-th layer remains.
Then, after applying the resulting restriction, the final activation $y_{n}^{(L)}$ is now computed by $y_{n}^{(0)}$, which is determined by a bounded number of input bits.
\end{proof}

\subsection{Proving the Depth Reduction Lemma}
In this section, we will prove the Depth Reduction Lemma.
We construct the restrictions $\rho'_n$ separately for each $n$, on the basis of the given restriction $\rho_n$.
In this process, we will only \emph{restrict additional bits}, that is, the only case in which $\rho_n'(i)$ can be different from $\rho_n(i)$ is that $\rho_n'(i)$ may be $0$ or $1$ where $\rho_n(i)$ was $*$.
The construction proceeds in three stages $\rho_n^{(1)}$, $\rho_n^{(2)}$, and $\rho_n^{(3)} = \rho_n'$, which all may restrict additional bits.
At the end, we verify that the conclusion of the Depth Reduction Lemma is satisfied for the resulting restriction $\rho_n'$.

Throughout the proof, we will need a few parameters independent of $n$: First, we need an integer $k$ that has to be sufficiently large for the proof to succeed, and will be fixed later in the proof.
Second, we need parameters $\eta \in (0, \frac{1}{2})$, $q \in (0,1)$ and $\delta > 0$; we will also fix the specific values later in the proof.

\paragraph{Stage 1}
We start from $\rho_n$ and first modify it into a restriction $\rho^{(1)}_n$ such that each input bit serves as an input to at most $\leq \frac{1}{\eta} c/C$ many different layer-0 heads, when applying $\rho^{(1)}_n$.
Assume the number of input bits feeding into more than $\frac{1}{\eta} c/C$ different layer-0 activations is $\geq \eta Cn$.
Then the number of pairs of input bits and depending layer-0 activations is $>\eta Cn \cdot \frac{1}{\eta} c/C = nc$.
But there are at most $nc$ such pairs, because there are $n$ layer-0 activations, each of which depends on $\leq c$ inputs.
So the number of input bits with $> \frac{1}{\eta} c/C$ depending layer-0 heads is $\leq \eta Cn$.
We can obtain $\rho^{(1)}_n$ from $\rho_n$ by restricting these input bits to some fixed value in $\{0, 1\}$ (it doesn't matter which one), and the set $\{i \leq n: \rho^{(1)}_n(i) = *\}$ still has at least $(1-\eta) C n$ elements, for all sufficiently large $n$.

\paragraph{Stage 2}
We now describe the second stage.
We write $(h,i)$ for a layer-1 attention head $h$ ($h=1,\dots, H$) at position $i$ ($i=1, \dots, n$).
Fix such a head $(h,i)$.
As $y^{(0)}_i$ depends on $\leq c$ input bits, it can take on at most $\leq 2^c$ possible values.
For each possible value $z$,  and each position $j \in \{1, \dots, n\}$, we compute the maximum possible attention value that can be achieved for this pair:
\begin{equation}\label{eq:max-att}
\max_{x_1\dots x_n\ :\ y^{(0)}_i=z} f^{att}_{1,h}(z, y^{(0)}_j)
\end{equation}
considering only inputs $x_1\dots x_n$ that are compatible with the restriction $\rho^{(1)}_n$ constructed at Stage 1.
For each value $z$, we order the positions $\{1, \dots, n\}$ downwards by this value, obtaining a sequence $j_1^{(z)}, \dots, j_n^{(z)}$ for each layer-1 attention head $h$ at a position $i$ and each possible value $z$ of $y^{(0)}_i$ (In the case of ties, we order by  position, by Footnote 1).
For each layer-1 attention head and each $z$, we select a sequence $1 \leq i_1^{(z)} < i_2^{(z)} < \dots < i_{k}^{(z)} \leq n$ such that (1) for each $i_s^{(z)}$, there is at least one input $x_q$ that only feeds into the activation at position $j_{i_s^{(z)}}^{(z)}$ and no $j_{i_{s'}^{(z)}}^{(z)}$ ($s\neq s'$), and (2) $i_{k}^{(z)}$ is minimal, i.e. there is no subsequence with smaller $i_{k}^{(z)}$ that also satisfies (1).
This construction is visualized in an example in Figure~\ref{fig:selecting}.
Such a subsequence exists unless $n \leq ck$, in which case the Depth Reduction Lemma is already satisfied for this input length $n$.

If $z$ is a possible value of the activation $y^{(0)}_i$, then we
say that a pair $((i,h),z)$, of a head $h$ at position $i$ and a possible value $z$ of $y^{(0)}_i$, is \key{satisfied} if one of the layer-0 activations $y^{(0)}_{i_s^{(z)}}$ ($s \in \{1, \dots k\}$) is fixed by $\rho^{(1)}_n$ to the value achieving the maximum attention value (\ref{eq:max-att}).
Also, we say that $(h,i)$ is satisfied if each $((h,i),z)$ is.
The idea behind this definition is: If $((h,i),z)$ is satisfied, then there are at most $k$ different layer-0 heads that this head could attend to when applying $\rho_n'$, assuming that $y^{(0)}_i$ takes the value $z$.
As a consequence, a satisfied head can only depend on $ c\cdot (2^ck+1)$ many input bits.
Our aim will be to construct $\rho_n'$ so that each layer-1 head is satisfied.

\begin{figure}[ht]
    \centering
    \begin{tabular}{cccc}
    (a) & (b) \\
    \includegraphics[height=0.1\textheight]{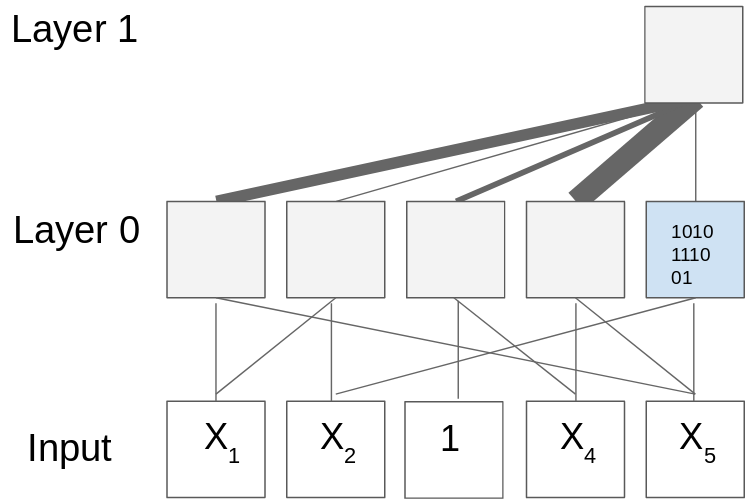} &
        \includegraphics[height=0.1\textheight]{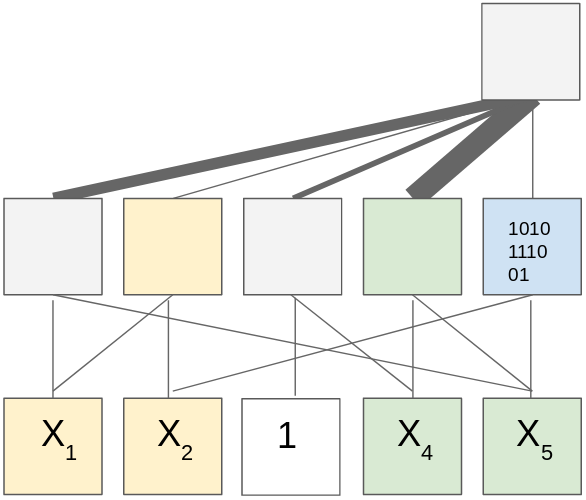}&
        \end{tabular}
	\caption{Selecting the sequence $i_{1}^{(z)} \dots i_{k}^{(z)}$: We have a $c$-transformer with $c=2$, i.e., each Layer 0 activation only depends on at most two input bits.
	(a)  We fix a head in Layer 1 at position $i$ (here, $i=5$), and some value $z$ for $y^{(0)}_i$ (blue activation). For each other Layer-0 activation $y^{(0)}_j$, we compute the maximal possible attention value between that activation and the Layer 1 head, assuming the fixed value $z$ for $y^{(0)}_i$ -- these maximum attention values are visualized by the thickness of the different lines.
	(b) We select $k = 2$ activations from Layer 0, marked in yellow and green. For each of these, there is at least one (in fact, two in the example) input bits (also marked in yellow and green) that feed into this one and no other selected activation.}
	\label{fig:selecting}
\end{figure}

A layer-1 head $k$-\textbf{depends} on some input $x_i$ if $\rho_n(i) = *$ and $x_i$ appears as an input to some $j_r^{(z)}$ for $r \leq i_k^{(z)}$, for some $z$.
Because $i_k^{(z)}$ is minimal, a layer-1 head $k$-depends on an input if and only if that input appears as an input to some $j_{i_s^{(z)}}$ ($s \leq k$).
In particular, a layer-1 head $k$-depends only on $\leq 2^c ck$ input variables.
Two layer-1 head are $k$-\textbf{neighbors} if some $j_{i_s^{(z)}}$ for one and $j_{i_{s'}^{(z')}}$ for the other both $k$-depend on some input bit $x_l$.

We will construct $\rho'_n$ using probabilistic arguments over randomly chosen restrictions.
For this approach to succeed, we require a  sufficient amount of independence between the activations of different heads in layer 1.
We thus need to ensure that the number of $k$-neighbors of each head is bounded.
Recall $\eta \in (0,\frac{1}{2})$, and let $H$ be the number of attention heads in each position of layer 1.

We modify $\rho^{(1)}_n$ into $\rho^{(2)}_n$ so that each layer-0 head has at most $\leq 2^c kH$ many $k$-depending unsatisfied layer-1 heads.
Assume that indeed some layer-0 head  has more than $2^c kH$ many $k$-depending unsatisfied layer-1 heads. 
By fixing $\leq c$ input bits and appealing to the Pigeonhole principle, we can fix this head to a value that achieves the maximum attention value for at least $> kH$ many of these $k$-depending layer-1 heads.
Let $\rho^{(2)}_n$ be the restriction resulting from adding this to $\rho^{(1)}_n$.
Once we have done this, 
$\{i \leq n: \rho_n^{(2)}(i) = *\}$ still has at least $(1-\eta) C n - c$ elements, and more than $kH$ many additional pairs $((h,i),z)$ are now also satisfied.
We then repeat the selection of the sequence $j_1^{(z)}, \dots, j_n^{(z)}$ (substituting $\rho^{(2)}_n$ for $\rho^{(1)}_n$ in the definition), and repeat the construction described here, to restrict additional input bits in $\rho^{(2)}_n$.
We iterate this procedure until no layer-0 head has $>  2^c kH$ many $k$-depending unsatisfied layer-1 heads $(h,i)$. 
This procedure can be iterated at most until each layer-1 head is satisfied, that is, at most $\frac{2^c H n}{kH} = \frac{2^c n}{k}$ times.
Let $U$ be the number of times this procedure is iterated ($U \leq \frac{2^c n}{k}$).
At the end, $\{i \leq n: \rho_n^{(2)}(i) = *\}$ has at least $(1-\eta) C n - cU \geq \left((1-\eta) C  - \frac{2^c c}{k}\right) n$ elements.
By choosing $k$ so large that $\frac{2^c c}{k} \leq \eta$, we find that $\{i \leq n: \rho^{(2)}_n(i) = *\}$  still has at least $(1-2\eta) C n$ many elements.
Once this is completed, each layer-0 head has at most $\leq 2^c kH$ many $k$-depending unsatisfied layer-1 heads.
Thus each input bit now has at most $\leq \frac{2^c}{\eta}kcH/C$ many $k$-depending unsatisfied layer-1 heads.
Consequently, every unsatisfied layer-1 head has at most $f \leq \frac{2^{2c}}{\eta}c^2k^2H/C$ many unsatisfied $k$-neighbors.

\paragraph{Stage 3}
In order to construct the third and final restriction $\rho^{(3)}_n$, we apply the ``probabilistic method'': We define a probability distribution over restrictions $\rho^{(3)}_n$, and show that the probability assigned to restrictions of the type we require is strictly greater than zero, showing that such a restriction exists.
For each input length $n$, define the distribution over restrictions $\rho_n^{(3)}$ that independently assigns to each input position $i \in \{1, \dots, n\}$ the symbol $1$ or $0$ with probability $q/2$ each ($q \in (0,1)$ chosen later), and $*$ with probability $1-q$.
On those input bits where $\rho_n^{(2)}(i) \neq *$, we restrict this random restriction to agree with $\rho_n^{(2)}(i)$.
For an layer-1 attention head $(h,i)$ and for each value $z$ (there are at most $2^c$ such), define $X_{i,h}^{(z)}$ to be the event that, for this head, none of $y_{j_{i_1^{(z)}}}^{(0)}, \dots, y_{j_{i_k^{(z)}}}^{(0)}$ are fixed to the value that produces the highest attention weight.
Define $X_0$ to be the event that more than $(1+\delta)q$ of the input bits that $\rho_n^{(2)}$ maps to $*$ are set to $0/1$ by $\rho_n^{(3)}$ (where $\delta \in (0,1)$, to be fixed later).
Our goal will be to show that a nonzero amount of probability mass is assigned to restrictions $\rho_n'$ avoiding all events $\{X_0\} \cup \{X_{i,h}^{(z)} : i, z\}$.

First, a Chernoff bound gives~\cite[Theorem 4.4]{mitzenmacherprobability}
\begin{equation}
\Prob(X_0) \leq    \exp\left(-\delta^2q(1-2\eta)Cn / 3\right)
\end{equation}
since $\rho_n^{(2)}$ had $\geq (1-2\eta)Cn$ unrestricted input bits after Stage 2.

Second, we show that the probability of $X_{i,h}^{(z)}$ ($i=1,2,\dots, n$, $h=1, \dots, H$) decays exponentially in $k$.
First, if $((h,i),z)$ is already satisfied after Stage 2, then $\Prob(X_{i,h}^{(z)}) = 0$.
Else, fixing $z$ for ease of notation, let $Y_{i,h}^t$ ($t=1,\dots,k$) be the event that the layer-0 activation $y_{j_{i_t^{(z)}}}^{(0)}$ is not fixed to the value that produces the highest attention weight, for the given attention head $(h,i)$.
Note that $X_{i,h}^{(z)} = \bigcap_t Y_{i,h}^t$.
We have $\Prob(Y_{i,h}^s) \leq 1-(q/2)^c \in (0,1)$. 
Any $Y_{i,h}^s$ can be statistically dependent on at most $c \cdot \frac{1}{\eta}c/C = \frac{1}{\eta}c^2/C$ other events $Y_{i,h}^{s'}$, because each input bit has at most $\frac{1}{\eta} c/C$ depending layer-0 heads.
Therefore, there is a set of $\geq \frac{k}{\frac{1}{\eta}c^2/C}$ independent events among these.
Call these $Y_{i,h}^{t_1}, \dots, Y_{i,h}^{\frac{k}{\frac{1}{\eta}c^2/C}}$.
Then $X_{i,h}^{(z)} \subseteq \bigcap_{s=1}^{\frac{k}{\frac{1}{\eta}c^2/C}} Y_{i,h}^{t_s}$, and thus
\begin{equation}
\Prob(X_{i,h}^{(z)}) \leq \prod_{s=1}^{\frac{k}{\frac{1}{\eta}c^2/C}} \Prob(Y_{i,h}^{t_s}) \leq \left(1-(q/2)^c\right)^{\frac{k}{\frac{1}{\eta}c^2/C}}
\end{equation}
for each $i=1,2,\dots, n$ and $h=1, \dots, H$.

In order to conclude that there is a restriction $\rho^{(3)}_n$ avoiding all events $\{X_0\} \cup \{X_{i,h}^{(z)} : i, h, z\}$, we apply the Lov{\'a}sz Local Lemma \cite[Theorem 6.17]{mitzenmacherprobability}.
Each event $X_{i,h}^{(z)}$ is statistically independent of the set $\{X_{(j,h')}^{(z')} : \text{heads } {(j,h')} \text{ and } {(i,h)} \text{ are not $k$-neighbors}\}$.
The complement of this set has cardinality $\leq f= \frac{2^{2c}}{\eta}c^2k^2H/C$, as concluded at the end of Stage 2.
Set $A:=\frac{1}{k^2}$, $B:=\frac{1}{2}$.
By the Lov{\'a}sz Local Lemma, it is sufficient show the following: 
\begin{align}\label{eq:lovasz-1}
&\Prob(X_{i,h}^{(z)}) \leq A(1-B)(1-A)^{f} \\ \label{eq:lovasz-2}
&\Prob(X_0)  \leq B (1-A)^{2^cHn}
\end{align}
The Lov{\'a}sz Local Lemma then guarantees that there is some input restriction $\rho^{(3)}_n$ that avoids all events $\{X_0\} \cup \{X_{i,h}^{(z)} : i, h, z\}$.
For~(\ref{eq:lovasz-1}), we need
\begin{align}\label{eq:x1-ineq}
    D &\leq A^{1/k}(1-B)^{1/k}(1-A)^{f/k} 
\end{align}
where $D =  \left(1-(q/2)^c\right)^{\frac{1}{\frac{1}{\eta}c^2/C}} \in (0,1)$.
For the first term on the right, 
\begin{align*}
\lim_{k\rightarrow \infty} A^{1/k} = \lim_{k\rightarrow \infty} \exp\left(-\log(k^2) / k\right) = 1
\end{align*}
Also, $\lim_{k\rightarrow \infty} (1-A)^{f/k}$ equals
\begin{align*}
\lim_{k\rightarrow \infty} \left(1-\frac{1}{k^2}\right)^{\frac{2^{2c}}{\eta}c^2kH/C} = \lim_{k\rightarrow \infty} \left(1-\frac{E^2}{k^2}\right)^{k} = 1
\end{align*}
for $E := \frac{2^{2c}}{\eta}c^2H/C$. So, if we choose $k$ large enough (independently of $n$), the RHS of (\ref{eq:x1-ineq}) can be made arbitrarily close to $1$, in particular, greater than $D$.
In order to also satisfy~(\ref{eq:lovasz-2}), we need
\begin{align*}
\exp\left(-\delta^2q(1-2\eta)C/3\right)  \leq B^{1/n} (1-A)^{2^c H}
\end{align*}
which holds for $n$, $k$ large enough (again, choosing $k$ independent of $n$). 
In conclusion, there exists, for each sufficiently-large $n$, a restriction $\rho^{(3)}_n$ that avoids all events $\{X_0\} \cup \{X_{i,h}^{(z)} : i, z\}$, for some $k$ independent of $n$.
For such a $\rho^{(3)}$, we have
\begin{equation*}
|\{i \leq n: \rho^{(3)}_n(i) = *\}|\geq (1-2\eta)\cdot (1-(1+\delta)q) C n
\end{equation*}
for all sufficiently large $n$.
Then choose $\eta \in (0,\frac{1}{2})$ small, $q \in (0,1)$, and $\delta >0$ (such that $(1+\delta)q \in (0,1)$) in such a way as to achieve $(1-2\eta)\cdot (1-(1+\delta)q) = C'/C$.

After applying $\rho^{(3)}_n$, every layer-1 head $b_{j,1,h}$ depends only on (1) the $c$ input bits feeding into $y_j^{(0)}$, and (2) the $\leq c2^ck$ input bits that the head $k$-depends on.
Thus, each layer-1 activation $y_j^{(1)}$ only depends on $\leq c\cdot (2^ckH+1)$ input bits: There are $\leq H\cdot c \cdot 2^c \cdot k$ input bits that the $H$ different attention heads $k$-depend on, plus a skip-connection from $y_j^{(0)}$, which itself depends on $\leq c$ input bits.
We can thus remove layer 0, convert layer-1 activations $y_j^{(1)}$ into layer-0 activations $y_j^{(0)}$, and obtain a $(c\cdot(2^ckH+1))$-transformer performing the same computation as before when $\rho^{(3)}$ is applied.
This concludes the proof of the Depth Reduction Lemma.

\section{Results for Soft Attention}\label{sec:soft}

In the previous section, we showed that transformers using hard attention are not able to recognize a range of core formal languages.
In this section, we study soft attention.
It turns out that proving limitations as strong as what we found in the hard attention setting would settle a major open problem in computational complexity, and  may therefore be extremely hard to attain with currently available mathematical methods.\footnote{Showing that soft attention transformers cannot achieve perfect accuracy on evaluating Boolean formulas would separate the complexity classes $LTC^0$ and $NC^1$, a widely conjectured but long-open problem in computational complexity.}
This barrier prevents us from proving bounds on the \emph{accuracy} that soft attention transformers can achieve; nevertheless, we will be able to prove limitations on the achievable \emph{cross-entropy} in modeling distributions over the formal languages.
We will use the smoothness of the operations used in transformers to show that any transformer, as inputs get longer, will not be able to robustly model such distributions.
The idea behind the proof is that the impact of any single input symbol on the output of the transformer is small if the input is long:
\begin{lemma}\label{lem:soft-tech}
Let a soft attention transformer be given, and let $n$ be the input length.
If we exchange one input symbol $x_i$ ($i < n$), 
then the change in the resulting activation $y_n^{(L)}$ at the decoder layer is bounded as $\mathcal{O}(\frac{1}{n})$ with constants depending on the parameter matrices.
\end{lemma}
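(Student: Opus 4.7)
The plan is to prove the lemma by induction on the layer depth, establishing the stronger invariant that at every layer $k$, the activation $y_i^{(k)}$ at the perturbed position may change by $\mathcal{O}(1)$, while every other activation $y_j^{(k)}$ with $j \neq i$ changes by only $\mathcal{O}(1/n)$. Applied to $k=L$ and $j=n$, this yields the lemma. The engine that drives the dilution by a factor of $1/n$ is the following uniform attention bound: if all parameter matrices have bounded operator norm and embeddings are bounded, then all activations across all layers are bounded in norm (using Lipschitzness of $f^{att}$ and $f^{act}$), hence every attention score $a_{j,l}^{(k,h)}$ lies in $[-M,M]$ for some constant $M$, and therefore every softmax weight satisfies
\begin{equation*}
\hat{a}_{j,l}^{(k,h)} \;=\; \frac{e^{a_{j,l}^{(k,h)}}}{\sum_{l'} e^{a_{j,l'}^{(k,h)}}} \;\in\; \bigl[e^{-2M}/n,\ e^{2M}/n\bigr].
\end{equation*}
So every attention weight is pinned to $\Theta(1/n)$ uniformly in $j,l,k,h$.

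The base case $k=0$ is immediate since $y_l^{(0)}=f(v_l,p_l)$ is local: $y_i^{(0)}$ changes by $\mathcal{O}(1)$ and every other $y_l^{(0)}$ is unchanged. For the inductive step, fix $j \neq i$ and write the change in a head output as
\begin{equation*}
\tilde b_{j,k,h}-b_{j,k,h} \;=\; \sum_l \tilde{\hat a}_{j,l}^{(k,h)}\bigl(\tilde y_l^{(k-1)}-y_l^{(k-1)}\bigr) \;+\; \sum_l \bigl(\tilde{\hat a}_{j,l}^{(k,h)}-\hat a_{j,l}^{(k,h)}\bigr) y_l^{(k-1)}.
\end{equation*}
For the first sum, the induction hypothesis bounds value changes by $\mathcal{O}(1/n)$ for $l\neq i$ and $\mathcal{O}(1)$ for $l=i$; since weights sum to $1$ and $\tilde{\hat a}_{j,i}^{(k,h)}=\mathcal{O}(1/n)$ by the uniform bound, the sum is $\mathcal{O}(1/n)$. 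For the second sum, Lipschitzness of $f^{att}_{k,h}$ in both arguments gives attention score changes of $\mathcal{O}(1/n)$ when $l\neq i$ (both query $y_j^{(k-1)}$ and key $y_l^{(k-1)}$ moved by $\mathcal{O}(1/n)$) and $\mathcal{O}(1)$ when $l=i$; applying the softmax Jacobian identity $\partial\hat a_m/\partial a_l = \hat a_m(\delta_{ml}-\hat a_l)$ and the uniform $\Theta(1/n)$ bound on weights shows $\tilde{\hat a}_{j,l}^{(k,h)}-\hat a_{j,l}^{(k,h)} = \mathcal{O}(1/n^2)$ for $l\neq i$ and $\mathcal{O}(1/n)$ for $l=i$, yielding another $\mathcal{O}(1/n)$ after summing against bounded $y_l^{(k-1)}$. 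Thus each $b_{j,k,h}$ changes by $\mathcal{O}(1/n)$ and, combined with the skip connection from $y_j^{(k-1)}$ (which changed by $\mathcal{O}(1/n)$) and Lipschitzness of $f^{act}$, so does $y_j^{(k)}$. At position $i$, the skip connection carries an $\mathcal{O}(1)$ change, giving the $\mathcal{O}(1)$ bound claimed for $y_i^{(k)}$.

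The main technical obstacle is bookkeeping in the softmax-Jacobian step: one must verify that the change in the attention distribution at an unperturbed query position $j$ remains $\mathcal{O}(1/n)$ in $\ell^1$ even though one of the $n$ input scores changes by $\mathcal{O}(1)$ (namely at $l=i$) while the rest change only by $\mathcal{O}(1/n)$. This is where the uniform bound $\hat a_{j,l}^{(k,h)}=\Theta(1/n)$ is indispensable, and it is ultimately why the lemma requires the smoothness/boundedness assumptions on activations and parameter matrices that the soft-attention setting allows us to invoke. Once this step is pushed through, applying the invariant with $k=L$ and $j=n$ (which is legitimate since $i<n$) delivers $\|\tilde y_n^{(L)}-y_n^{(L)}\|=\mathcal{O}(1/n)$ with constants depending on parameter norms, $L$, and $H$, completing the proof.
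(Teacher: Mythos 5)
Your proof takes essentially the same route as the paper's: the same layerwise induction with the invariant that the perturbed position changes by $\mathcal{O}(1)$ while all other positions change by $\mathcal{O}(1/n)$, driven by the same key observation that bounded activations force every softmax weight to be $\Theta(1/n)$ uniformly. If anything, your write-up is more careful than the paper's: the paper's displayed inequality $\|b_{j,k,h}-b_{j,k,h}'\|\leq\sum_{w}\hat{a}_{j,w}^{k,h}\|y_w^{(k-1)}-{y_w^{(k-1)}}'\|$ silently drops the contribution from the change in the attention weights themselves, which is precisely the second term you isolate and control via the softmax Jacobian; your version closes that gap, provided you make the first-order Jacobian estimate rigorous (e.g.\ by a mean-value argument along the segment between the two score vectors, on which all scores stay in $[-M,M]$ and hence all weights stay $\Theta(1/n)$).
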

This contrasts with recurrent networks:
Changing a single input can have nonnegligible impact on the final state even for very long input.
E.g., an RNN recognizing \textsc{Parity} through a hidden state that encodes parity of the current prefix will flip its hidden state if a single input bit is flipped.

Lemma~\ref{lem:soft-tech} entails that, as inputs become longer, soft attention transformers cannot achieve good cross-entropies on prediction problems that are very sensitive to individual input symbols:
A Lipschitz-continuous prediction function, such as a ReLU MLP with a softmax output, will not be able to make very different predictions for inputs that are encoded into similar activations $y_n^{(L)}$.

To make all our assumptions explicit, we will assume the following setting, though the results do not depend on the specific details.
For \textsc{Parity}, we consider the distribution over bitstrings generated by a two-state automaton that -- if the number of $1$s emitted so far is even -- terminates with probability $p$, and otherwise emits a $1$ or $0$ with equal probability each.
Given a prefix of a string drawn from this distribution, we ask the transformer to predict the next symbol from $\Sigma = \{0, 1, \textsc{EndOfSequence}\}$.
Note that the next symbol can be \textsc{EndOfSequence} if and only if the prefix has an even number of $1$s.
For \textsc{2Dyck}, we follow the experimental study of \citet{skachkova2018closing} and take the distribution generated by a PCFG that expands $S \rightarrow (S)S$ or $S \rightarrow [S]S$ with probability $p/2$ each, and $S \rightarrow \epsilon$ with probability $1-p$. 
We ask the model to predict the next character among $\Sigma = \{(,),[, ], \textsc{EndOfSequence}\}$.

\begin{thm}
Let a soft attention transformer be given for \textsc{Parity} or \textsc{2Dyck}.
As $n\rightarrow\infty$, cross-entropy on predicting the next symbol converges to unigram chance level (\textsc{Parity}), or is at least separated from the optimal cross-entropy by some constant $\epsilon >0$  (\textsc{2Dyck}).
\end{thm}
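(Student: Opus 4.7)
The starting point is Lemma~\ref{lem:soft-tech}: because the decoder applied to $y_n^{(L)}$ is a Lipschitz map (a linear transformation composed with softmax), flipping a single input symbol changes the predicted next-symbol distribution $Q_n(\cdot\mid x_1\dots x_n)$ by $O(1/n)$ in total variation, with constants depending only on the parameters. The plan is to exhibit, for each language, pairs of contexts on which the Bayes-optimal next-symbol distribution differs by $\Omega(1)$ while the transformer's predictions must agree up to $O(1/n)$, and to convert this into a cross-entropy lower bound via a convexity argument.

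For \textsc{Parity}, I would pair each surviving prefix $\sigma$ of length $n-1$ with $\sigma'$ obtained by flipping its first bit. The automaton's next-symbol distribution depends only on the current parity, which differs between $\sigma$ and $\sigma'$: the even-parity distribution places mass $p$ on \textsc{EndOfSequence}, while the odd-parity distribution places $0$ there, so $\|P(\cdot\mid\sigma) - P(\cdot\mid\sigma')\|_{TV} \geq p$, whereas $\|Q_n(\cdot\mid\sigma) - Q_n(\cdot\mid\sigma')\|_{TV} = O(1/n)$. Applying Gibbs' inequality pairwise, the transformer's contribution to the cross-entropy on $\{\sigma,\sigma'\}$ is at least $H(\overline{P}) - O(1/n)$, where $\overline{P} := \tfrac{1}{2}(P_{\text{even}} + P_{\text{odd}})$. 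A short calculation, using that even and odd parities have asymptotically equal probability under the generating process, shows that $\overline{P}$ equals the marginal (unigram) next-symbol distribution in the limit; averaging over $\sigma$ gives expected cross-entropy at least the unigram entropy $-\,o(1)$. Since the constant unigram predictor attains this bound, the achievable cross-entropy converges to the unigram level.

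For \textsc{2Dyck}, I would construct, for each large $n$, pairs of PCFG-support prefixes $\sigma,\sigma'$ differing in exactly one opening-bracket position, chosen so that the innermost still-unclosed bracket is `(' in $\sigma$ and `[' in $\sigma'$. Conditional on such a prefix, the next character is a closing bracket with probability $\tfrac{1-p}{1-p/2}$ and must then match the innermost opener, so $\|P(\cdot\mid\sigma)-P(\cdot\mid\sigma')\|_{TV}$ is bounded below by a positive constant. By the same pairwise convexity argument, the transformer incurs an additive constant loss over the optimum on each such event; averaging under the PCFG preserves the gap provided a constant fraction of length-$n$ prefixes admit such a flip, yielding the required $\varepsilon > 0$.

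The main obstacle is the combinatorial claim backing the last step for \textsc{2Dyck}: verifying that a constant (in $n$) fraction of length-$n$ PCFG prefixes carry a nonempty stack of unclosed openers whose top symbol can be resampled while keeping the prefix in the support. This reduces to a small stationary-distribution computation on the PCFG's prefix process, using that by its symmetric generation the top-of-stack symbol is, conditionally on the stack profile, uniformly distributed between `(' and `['. Everything else is bookkeeping on top of the Lipschitz bound from Lemma~\ref{lem:soft-tech}.
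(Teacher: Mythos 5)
Your proposal follows essentially the same route as the paper: invoke Lemma~\ref{lem:soft-tech} to force the model's predictions on two single-symbol-flipped contexts to agree up to $O(1/n)$, pair contexts whose Bayes-optimal next-symbol distributions differ by a constant, and convert this into a cross-entropy gap by convexity. For \textsc{Parity} the pairing-plus-Gibbs argument is a slightly more explicit version of what the paper asserts, and for \textsc{2Dyck} your ``innermost unclosed opener'' flip and the stationary-distribution computation on the prefix's height process are exactly the paper's construction (the paper isolates the $\log 2$ gap by splitting the prediction into bracket-direction versus bracket-shape, which is the same convexity step in different packaging).

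There is one concrete gap in the \textsc{2Dyck} half. Lemma~\ref{lem:soft-tech} only gives the $O(1/n)$ bound when the exchanged symbol is \emph{not} the position whose activation is read out: its own induction shows that at the flipped position $i$ itself the activation can move by $O(1)$, which is why the lemma is stated for $i<n$. Your flip targets the innermost unclosed opening bracket, and for a constant fraction of prefixes (e.g., any prefix ending in an opening bracket) that opener \emph{is} the last symbol of the prefix, so the perturbation bound does not apply and the model could legitimately distinguish $\sigma$ from $\sigma'$. You must therefore restrict to prefixes in which the top-of-stack opener lies strictly before the final position; the paper does this by conditioning on the event that the prefix is unbalanced \emph{and ends with a closing bracket}, and shows via the same positive-recurrence argument that this joint event still has probability bounded below by a constant $P_0>0$. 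Adding that conditioning (and rechecking that the resulting event retains constant probability) closes the gap; the rest of your argument then goes through.
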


\begin{proof}
First, let us consider \textsc{Parity}.
Exchanging a single bit flips membership in \textsc{Parity}.
Thus, for any $x \in \textsc{Parity}$, there is a string $x' \not\in \textsc{Parity}$, differing only in one bit.
As $x$ and $x'$ differ only in one bit, the transformer's output activations differ by $\mathcal{O}(\frac{1}{n})$.
Therefore, a Lipschitz-continuous prediction function cannot robustly assign different next-symbol probabilities after even and odd numbers of 1s, and cross-entropy will converge to unigram chance level.

For \textsc{2Dyck}, consider a string $x$ of length $n$, known to be the prefix of a word generated by the PCFG.
One can show that there is a constant $P_0 \in (0,1)$ (dependent on $p$ but not $n$), such that $x$ both ends with a closing bracket, and is unbalanced, with probability $\geq P_0$.\footnote{One can show this formally using a Markov chain argument.
Let the \key{height} $H(x)$ of a word $x$ be the number of opening brackets minus the number of closing brackets in $x$.
When iteratively sampling a symbol sequence using a pushdown automaton for \textsc{2Dyck}, the height $H_{n}$ of the prefix $x$ up to length $n$ forms a Markov chain taking values in $\mathbb{N}$. 
The prefix $x$ is unbalanced if and only if $H_n>0$, this is always the case whenever $n$ is odd.
Restricting to even $n$, the chain $\{H_{n} : n = 0, 2, 4, ...\}$ is aperiodic and takes values in $\{0, 2, 4, \dots\}$.
It is also positive recurrent, as words sampled from the PCFG have finite expected length \citep[2.2.1]{skachkova2018closing}.
Therefore, the Markov chain $\{H_{n} : n = 0, 2, 4, ...\}$ converges to its stationary distribution~\cite{mitzenmacherprobability}, which -- by positive recurrence -- must assign some nonzero weight $\pi_{2i}$ to each height  $2i$ ($i \geq 0$).
Hence, even when $n$ is even, the prefix $x$ is unbalanced with nonzero probability $1-\pi_0$ asymptotically independent of $n$.
Also, since the transition probabilities $P(H_{n+1}|H_n)$ are independent of $n$, there is an asymptotically constant nonzero probability that $x_1\dots x_{|x|-1}$ has height larger than $x$, i.e., the last bracket of $x$ is a closing one.}
After such an $x$, the next symbol is a closing bracket with constant nonzero probability $(1-p)$.
If $x$ can be followed by, say, `)' but not `]', then there is a string $x'$, differing only in one input position, but for which the next symbol can be `]' but not `)'.
As $x$ was assumed to end with a closing bracket, the exchanged symbol is not the last symbol of $x$, and thus the transformer's predictions on $x$ and $x'$ differ only by $\mathcal{O}(\frac{1}{n})$.
We can decompose the prediction task into predicting (1) whether an opening or closing bracket, or \textsc{EndOfSequence}, follows, and (2) whether a round or square bracket follows, in case a bracket follows.
The cross-entropy loss is the sum of the cross-entropies incurred on these two successive predictions tasks.
Therefore, when such a prefix $x$ is followed by the correct closing bracket, say `)', the model will incur, as $n \rightarrow \infty$, a cross-entropy loss on the second task of at least $\log 2$, reflecting at-chance performance in choosing between the two possible closing brackets.
In contrast, optimal cross-entropy loss on (2) would be $0$, as the bracket type (round or square) is actually fully determined by $x$.
Thus, the overall cross-entropy on all prefixes $x$ of length $n$ is, asymptotically as $n \rightarrow \infty$, at least $ P_0 \cdot (1-p) \cdot \log 2 > 0$ more than the optimal cross-entropy.
\end{proof}

We proceed to proving Lemma~\ref{lem:soft-tech}.
\begin{proof}[Proof of Lemma~\ref{lem:soft-tech}]
We compare the activations at the decoder layer for two inputs that only differ in the input at the $i$-th position.
Let $D = \|v_i-v_i'\|_2$ the norm of the difference of the input embeddings at this position.
	We show by induction over $k = 1, \dots, L$ that, for some $C > 0$ (chosen below) the differences between the resulting activations $y_j^{(k)}$, ${y_j^{(k)}}'$ are bounded as:
\begin{align*}
	\|y_i^{(k)}-{y_i^{(k)}}'\| &\leq C^{2k}D = \mathcal{O}(1) \\
	\|y_j^{(k)}-{y_j^{(k)}}'\| &\leq \frac{H^k C^{2k}D}{n} = \mathcal{O}(1/n)\ \ \ (j \neq i)
	\end{align*}
Once we have shown this, we know that the influence of any individual input on the final prediction is $\mathcal{O}(\frac{1}{n})$, with constants depending on the norms of parameter matrices and the number of layers.

At this point, it is worth remarking that a key property of transformers for this proof is that the number $L$ of layers is bounded independently of the input length.
A similar proof strategy can also be applied to other fixed-depth architectures that combine unboundedly many inputs in a smooth manner, such as 1D temporal convolutions with average pooling.

For $k=0$, $\|y_i^{(0)} - {y_i^{(0)}}'\| \leq D$, 
and 
$\|y_j^{(0)} - {y_j^{(0)}}'\| = 0$ for $j \neq i$.
For $k>0$, we first note that $\|y_j^{(k)}\|_2 \leq 2 C_{f^{act}}^{L}  (\|p_j\| + \|v_j\|)$,
where $C_{f^{act}} < \infty$ depends on the norms of the parameter matrices of $f^{act}$, which is implemented as a ReLU MLP \cite{vaswani2017attention}.
We'll write $F$ for this upper bound for $\|y_j^{(k)}\|_2$.
Attention logits are bounded by $A := F^2 C_{f^{att}}$ in the case of dot-product attention, and $A := 2 F C_{f^{att}}$ in the case of additive attention.
Then any attention weight $\widehat{a}_{j,i} = \exp(a_i)/\sum_j \exp(a_j)$ is upper bounded by $\frac{\exp(A)}{\exp(A) + (n-1) \exp(-A)} \leq \frac{\exp(2A)}{n-1}$.

Choose $C : = 2\cdot (1 + \exp(2A) + L_{f^{act}})$, where $L_{f^{act}}$ is the Lipschitz constant of the ReLU MLP $f^{act}$. 
Recall that activations $y_i^{(k)}$ are defined as $f^{act}(y_i^{(k-1)}, b_{i,k,1}, \dots, b_{i,k,H})$, where $b_{i,k,h}$ equals $\sum_{j=1}^n \hat{a}_{i,j}^{k,h} y_j^{(k-1)}$.
We first calculate
\begin{align*}
& \|b_{j,k,h} - b_{j,k,h}'\|  \leq \sum_{w=1}^n \hat{a}_{j,w}^{k,h} \|y_w^{(k-1)} - {{y}_w^{(k-1)}}'\|
\\
& = \hat{a}_{j,i}^{k,h} \|y_i^{(k-1)} - {{y}_i^{(k-1)}}'\|  + \sum_{w \neq i} {\hat{a}_{j,w}}^{k,h} \|y_w^{(k-1)} - {{y}_w^{(k-1)}}'\|
\end{align*}
which, using the induction hypothesis, is at most:
\begin{align*}
\frac{\exp(2A)}{n-1}  C^{2(k-1)} D + \frac{H^{k-1}C^{2(k-1)}D}{n} \leq \frac{H^{k-1} C^{2k-1} D}{n}
\end{align*}
Plugging this into the definition of $y_i^{(k)}$, the difference $\|y_j^{(k)} - {y_j^{(k)}}\|$ is at most
\begin{align*}
	L_{f^{act}} \cdot \left(\|y_j^{(k-1)}-{y_j^{(k-1)}}'\| + \sum_{q=1}^H \|b_{i,k,q} - b_{i,k,q}'\|\right)
\end{align*}
First, if $j= i$, this is bounded by (as $n \rightarrow \infty$)
\begin{align*}
\leq L_{f^{act}} \cdot \left(C^{2(k-1)}D + o(1)\right) \leq C^{2k}D
\end{align*}
Second, if $j\neq i$, it is bounded above by
\begin{align*}
	&  (1/n) \cdot L_{f^{act}} \cdot \left(H^{(k-1)} C^{2(k-1)}D + H^{k} C^{2k-1} D\right)
\end{align*} 
which is bounded by $\leq  \frac{H^{k} C^{2k} D}{n}$.
This proves the inductive step for $k>0$.
\end{proof}

\section{Discussion}\label{sec:discussion}

We have shown that, even with infinite precision, transformers cannot robustly model non-counter-free regular languages, nor basic hierarchical structure.
In the hard attention setting, our results hold independently of activation functions and the magnitude of the parameters, and show that no transformer   can accurately classify strings as belonging to such languages.
In the soft attention setting, our results are slightly less general, but still show that transformers cannot achieve perfect cross-entropies when modeling distributions over these formal languages.

Our results are asymptotic, in the sense that they show that any transformer will make mistakes on modeling \textsc{Parity} and \textsc{2Dyck} when the input is \emph{sufficiently long}.
A transformer may nonetheless be able to perform well on on short inputs; indeed, given any bound $N$ on the input length, it is possible to construct a transformer that will achieve perfect accuracy or cross-entropy on all examples of length $n \leq N$; our results show that the number of heads and layers, or the parameter norms, will have to increase with $N$.
Practical implementations of transformers might thus be able circumvent such asymptotic limitations by using large numbers of layers and heads, in relation to the sentence lengths typically occurring in natural language.
Therefore, pending tighter nonasymptotic bounds, the results reported here need not constitute conclusive evidence for practical limitations of real-world NLP systems.

We believe that the most imminent implications of our results are theoretical in nature.
They showcase mathematical techniques for analyzing the capabilities of self-attention, an architecture at the heart of recent advances in NLP.
These tools provide theoretical understanding of differences between self-attention and theoretically more well-studied recurrent architectures:
Recurrent networks such as LSTMs can perfectly emulate finite-state automata, and therefore can model any finite state language with optimal cross-entropy, as long as the state transition and symbol emission distributions are Markovian.
In particular, \textsc{Parity} of i.i.d. bitstrings can be predicted with perfect accuracy and cross-entropy, independent of the input length.
Furthermore, infinite-precision RNNs and LSTMs can model stacks \cite{tabor2000fractal,gruning2006stack,kirov2012processing} and thus are theoretically capable of modeling \textsc{2Dyck} and other deterministic context-free languages perfectly.
The results presented here thus theoretically confirm the intuition that models entirely built on self-attention may have restricted expressivity when compared to recurrent architectures~\cite{tran2018importance,dehghani2018universal,shen2018disan,chen2018best,hao2019modeling}.
Complementing the asymptotic methods developed here with empirical studies or non-asymptotic extensions is an interesting avenue for future research.

While finite languages are sufficient to model language up to any finite bound on sequence length, it has typically been argued that asymptotically more powerful formalisms at the level of context-free grammars or beyond are necessary to properly capture generalizations about the syntax and meaning of natural language~(e.g., \citet{chomsky1957syntactic,shieber1985evidence}).
Our results entail that self-attention is limited in its ability to model context-free languages or evaluate logical formulas.
In particular, self-attention cannot in general emulate stacks or arbitrary finite-state automata.
Whether this hinders its capacity for syntactic generalization in practice is an interesting question; empirical research suggests that models with strong quantitative performance -- both recurrent and transformer models -- continue to struggle with syntactic generalization and that quantitative performance metrics such as perplexity can partly be dissociated from syntactic knowledge displayed on more challenging benchmarks (e.g., \citet{kuncoro2018lstms,marvin2018targeted, tran2018importance,mccoy2019berts}).

Nonetheless, the success of transformers across NLP tasks suggests that many aspects of natural language can be modeled well with methods that are formally too weak for the formal languages typically assumed in theoretical linguistics.
Beyond general limitations of asymptotic analysis, a possible reason for this phenomenon is that language uses recursive structure only in restricted ways due to cognitive factors.
For instance, it has long been noted that center embeddings, syntactic structures exhibiting iterated bracketing, are very challenging for humans to process \cite{miller-finitary-1963,gibson1999memory}.
Intriguingly, self-attention bears some resemblance to psycholinguistic models of memory in human sentence processing that assume that humans, while processing a word, attend to chunks that were stored in memory when processing some previous words \cite{lewis2005activation,parker2017cue}.
Such processing models predict difficulty with center embedding because they cannot count brackets \cite{lewis2005activation}, akin to what we have shown theoretically for neural network models based on self-attention.

\section{Conclusion}
We formally investigated the capabilities of self-attention in modeling regular languages and hierarchical structure.
We showed that transformers cannot model periodic regular languages or basic recursion, either with hard or soft attention, and even if infinite precision is allowed. 
This entails that self-attention cannot in general emulate stacks or general finite-state automata.
Our results theoretically confirm the idea that self-attention, by avoiding recurrence, has quite limited computational power.

\section*{Acknowledgments}
I thank Dan Jurafsky, Yoav Goldberg, the anonymous reviewers, and the members of the Stanford NLP Group for helpful comments.

\bibliography{literature}
\bibliographystyle{acl_natbib}

\end{document}